\documentclass[11pt]{article}

\usepackage[utf8]{inputenc}
\usepackage[T1]{fontenc}

\usepackage[margin=1.15in]{geometry}
\usepackage{amsmath,amssymb,amsthm}
\usepackage{booktabs}
\usepackage{tabularx}
\usepackage[expansion=false]{microtype}
\usepackage[hidelinks,breaklinks]{hyperref}
\usepackage{url}

\theoremstyle{plain}
\newtheorem{proposition}{Proposition}
\newtheorem*{propA1}{Proposition A1}
\newtheorem*{propA2}{Proposition A2}
\newtheorem*{corA1}{Corollary A1}

\newenvironment{references}%
  {\clearpage
   \section*{References}%
   \small
   \setlength{\parindent}{-0.4in}%
   \setlength{\leftskip}{0.4in}%
   \setlength{\parskip}{5pt}%
   \par}%
  {\par}

\title{\textbf{Map Users and Mapmakers:}\\[2pt]
\large The Scope of Cognitive Attribution from Acquired Representations}

\author{Yiling Wu\\
\small BridgeM, Inc.}

\date{}

\begin{document}
\maketitle

\begin{abstract}
\noindent An acquired representation can enlarge a system's cognitive repertoire without
transferring the capacities exercised in producing that representation. This paper develops a
framework for specifying that enlargement and its limits. Its central contribution is a five-part
attribution table distinguishing effective tracking, application of acquired structures,
acquisition from explicit specifications, acquisition from identifying observations, and
retention and reuse. Each entry identifies a positive capacity commitment and a further claim
requiring additional support. The argument deliberately grants meaningful content, causal
efficacy, and productive inference, so that its conclusion does not depend on treating
representations as inert encodings. Map and category examples show why even complete application
competence leaves acquisition capacity undetermined, and why acquiring a criterion from its
description differs from finding it in examples. Short formal proofs appear in an appendix. The
framework is applied to Andrew Ng's world-model interpretation of Othello-GPT and to the specific
indicators discussed in contemporary accounts of machine concepts. It preserves demonstrated
recognition, classification, inference, and qualified acquisition while specifying what remains
unestablished about criterion discovery and accumulation. The result concerns the scope of
cognitive attribution rather than the constitutive conditions of concept possession: cognitive
achievements deserve credit for the capacities they establish, without silently importing a
broader repertoire through the labels attached to them.

\medskip
\noindent\textbf{Keywords:} representation; cognitive attribution; representation acquisition;
concept acquisition; large language models; world models
\end{abstract}

\section{Introduction}

A system that can use a map need not be able to make one. Its map can nevertheless support
recognition of locations, comparison of routes, and discovery of connections that the mapmaker
never considered. These are substantial cognitive achievements. The question is how to credit
them accurately while leaving open what the system can establish from observations of an
unfamiliar environment. Acquiring the result of an inquiry and acquiring the capacity to conduct
comparable inquiries are different achievements, even when the result supports extensive
reasoning.

The interpretation of Othello-GPT supplies a concrete reason to ask this question. Li et al.
(2023) provide evidence that a sequence model represents board states and uses them in predicting
legal moves. Andrew Ng (2023), writing in \emph{The Batch}, treats this work as evidence that
language models build world models and understand the world to a qualified extent. His claim
raises a substantive attribution question without asserting that a deployed model can
independently discover new games. Section~6 determines which capacities the underlying evidence
supports. It then applies the same analysis to the specific indicators of conceptual and
principled understanding discussed by Beckmann and Queloz (2026). The framework should earn its
place by clarifying serious, qualified claims, including claims it confirms.

The central thesis has a positive and a restrictive part. An acquired representation, together
with an adequately established operating mechanism, can warrant attribution of applications that
have never been tested, provided they fall within the mechanism's justified domain and
conditions. Complete application competence does not, by itself, entail the capacity to obtain
comparable representations under new acquisition conditions. The common ground is that
representation is not a uniform evidential unit: neither a content label nor a finding of
decodable information carries a fixed package of recognition, inference, learning, and retention
capacities. An attribution must identify which evidential basis licenses its extension. This is a
claim about the scope of warrant, not a claim that systems possessing acquired representations
have an intrinsic limit on what they can do.

The principal contribution is the attribution framework in Table~1. Its five rows distinguish
evidence for tracking, application, acquisition from specifications, acquisition from
observations, and retention and reuse. Each pairs a warranted capacity with a further attribution
that the preceding achievement does not automatically establish. These are dimensions of
evidential support, not grades on a single scale of intelligence. The framework makes a concrete
difference to reporting: a description such as ``the model learned the concept'' must say whether
a criterion was supplied, inferred, or retained, and what its successful use actually
demonstrates.

The argumentative strategy grants the strongest version of the representational achievement
relevant to this question. The examples preserve meaningful content, causal use, generalization
to untested queries, and a genuinely productive interpreter. These grants prevent a familiar but
inadequate reply: that the alleged representation is only information detectable by an external
probe. Even when that concern has been resolved, the acquisition question remains. The resulting
restriction therefore concerns the evidential reach of an effective representation, rather than a
defect built into a deliberately impoverished notion of representation.

There are important predecessors. Chollet (2019, \S II.1.1) distinguishes achieved skill from
skill-acquisition efficiency. Yiu et al. (2024) distinguish transmission from innovation.
Yildirim and Paul (2024) distinguish task-based instrumental knowledge from knowledge supported
by structured world models. Beckmann and Queloz (2026, \S 4.4) distinguish crystallized from
fluid understanding of principles. The present contribution fixes a further question: which
acquisition claims follow once an operative representation and its application competence have
already been granted? It also separates acquisition from a complete description from acquisition
from identifying evidence, a distinction that survives the grant of genuinely new representational
content. Gupta and Pruthi (2025) offer an especially close discussion: their case studies
challenge the sufficiency of state-tracking world models for richer forms of understanding. The
present argument allows rich represented content and productive inference from the outset. Its
distinctive task is to specify warranted application scope and separate it from acquisition under
different inputs, rather than to identify a missing level of explanatory content.

This is an account of the commitments of capacity attribution. It does not make independent
rediscovery, revision, or answerability a universal condition of concept possession. Those
constitutive questions require their own arguments. The present claim can be applied by theorists
who disagree about them, because each must still distinguish the capacities included in a favored
concept attribution from capacities merely associated with it. A theory of concept possession and
an account of evidence for acquisition answer different questions.

Sections~2 and~3 specify the relevant claims and explain why a cognitive product need not
transfer its producer's repertoire. Section~4 supplies the map and category arguments; the
appendix records their formal structure. Section~5 develops Table~1 and the grounds for extending
an attribution. Section~6 applies it to named representational claims, and Section~7 addresses
objections. The conclusion states the resulting scope directly.

\section{What is being attributed}

\subsection{Representation with an established role}

For the attribution analysis, an established representational achievement is a finding about a
system possessing a state or resource with specified content and an established role. This is a
specification of the evidence being assessed, not a definition under which every representation
must already satisfy a rich functional condition. The resource may be a distributed activation
pattern, a learned feature, a structured memory, or another physically realized vehicle. Its role
may include identifying instances, supporting an inference, guiding an action, or preserving a
distinction through a range of transformations. Describing such an achievement already makes some
claims about the system's capacities. The analysis therefore begins by asking which role has
actually been established.

Three descriptions illustrate the difference. A researcher might recover information about
location from a model's activations. A stronger investigation might establish that the system
itself exploits this information to distinguish locations. A further investigation might
establish that the system uses spatial relations to infer routes. These descriptions have
different commitments. In particular, granting the third cannot be followed by treating the
system as a passive store whose only achievement is recoverable information. The argument below
grants the stronger case: representational structure makes a causal difference to successful
inference. This is compatible with the demand that structural representations be causally
relevant and usable, rather than merely resemble their targets (G\l{}adziejewski \& Mi\l{}kowski,
2017).

Content-manipulable representation can be understood here as representation available for
specified operations that respect aspects of its content. The qualification \emph{specified}
matters. Availability for route inference is not identical to availability for reconstructing a
map, diagnosing a mistaken map, or selecting informative observations. Nor does the possibility
of misrepresentation automatically include an ability to identify and correct every
misrepresentation. If a theory makes additional capacities constitutive of its notion of
representation, those capacities belong in its premises. The subsequent evidential question is
whether those richer premises have been established.

The analysis does not settle which theory of representational content is correct. Its primary
audience includes theories on which a map can retain content and support reasoning even when its
possessor lacks the ability to build another map. These encompass familiar possibilities of
content acquisition through teaching, deference, design, and ordinary learning followed by loss
of a learning mechanism. Margolis (1998, pp.~351--354) distinguishes what determines a concept's
content from the mechanisms sustaining its connection to the world. That distinction helps
explain why neither content identity nor a shared referent determines the possessor's entire
cognitive repertoire.

Neutrality has a definite cost. This paper cannot derive one substantive functional lower bound
from content attribution independently of a theory of content. A functional-role theory may make
particular inferential dispositions constitutive; a teleosemantic theory may require a
producer--consumer organization or a history of selected use. Any current capacity inferred from
those commitments requires the theory's premises and, where necessary, a bridge from historical
function to present disposition. Neutrality does not show that no theory can supply a lower
bound. It means that this paper treats the relevant functional premises as premises to be
justified, rather than silently importing them through the word \emph{representation}.

A successful probe is still evidence. It can establish that a variable is recoverable from a
measured state and constrain hypotheses about the information encoded there. By itself, it need
not establish that the system's own processing uses that variable in recognition or inference.
The distinction is between informational evidence and an independently warranted functional
attribution, not between evidence and no evidence. Where a probe result is combined with
background theory or causal evidence, it can contribute to a broader attribution.

\subsection{An occurrence of acquisition and a capacity to acquire}

The statement that a system acquired a representation is initially a claim about an event or
history. A capacity to acquire representations is a disposition across a specified range of
conditions. One successful acquisition may provide evidence of such a disposition, but the two
are not identical. The event might depend on resources no longer available, occur in a narrow
class of circumstances, or involve a larger system than the one subsequently assessed. Evidence
about the event must be related to the current disposition before the latter is attributed.

For this purpose, a capacity claim should specify its bearer, time, task family, admissible
inputs, and resources. A model embedded in a training procedure is not interchangeable with the
same parameterized model after that procedure has been removed. An episode that preserves a
constructed memory is not interchangeable with an episode that resets it. These differences do
not decide which bearer is the philosophically privileged cognitive system. They identify the
bearer of a particular attribution and prevent its conditions from changing unnoticed during an
argument.

The relevant acquisition capacities are also plural. A system may produce new tokens of an
already available representation, recombine familiar elements, construct a new representation
from a description, infer a new criterion from examples, or organize its own inquiry into an
unfamiliar domain. These are distinguishable achievements. In what follows, \emph{acquisition}
concerns obtaining an operative representation appropriate to a new target or target structure
under specified input conditions. This includes modest learning within an existing
representational format. It does not require the first emergence of representation from an
entirely nonrepresentational substrate.

This modest usage prevents an inflated contrast. A route planner that computes a new path is
already generating a new representation in one sense. The thesis is not that it generates
nothing. It is that producing paths from an available map need not establish the capacity to
reconstruct maps from environmental evidence. Similarly, applying an established category to a
novel instance may involve new representational tokens without establishing the capacity to
acquire a different category boundary. The conclusions must track the particular sense of
generation at issue.

\subsection{Inputs are part of the capacity claim}

Supplying a complete rule, supplying examples governed by that rule, and supplying only
opportunities to investigate are different input conditions. A system that succeeds with one
input need not succeed with another. This remains true when an ideal observer could extract the
same information from either. Information may be available in a history without the system
possessing an effective way to extract and use it.

The distinction is not an objection to learning through language. Margolis and Laurence (2011,
pp.~518--521) discuss perceptual, communication-based, and associative routes to concept learning
in rejecting the claim that learning must take the form of explicit hypothesis testing.
Acquisition from testimony or a supplied definition may confer real understanding and new
abilities. It simply does not confer, by that fact alone, every ability required to obtain the
same result through independent investigation. Input-sensitive attribution recognizes more than
one kind of acquisition rather than reserving the term for the most demanding kind.

\section{What a representational product can transmit}

The motivating asymmetry is between benefiting from a completed cognitive achievement and
possessing the capacities exercised in completing it. A map can make the results of surveying
available without making its user a surveyor. A taxonomic description can transmit a useful
distinction without reproducing the original investigation. The recipient can nevertheless do
more than repeat the producer's outputs. Once incorporated into an effective inferential system,
the resource can support consequences that neither party previously considered. The resulting
capacity is an achievement of the recipient as equipped with that resource.

The asymmetry therefore has two directions. The acquisition of a product does not guarantee
inheritance of its producer's capacities. The product's external origin does not guarantee the
absence of capacities in its recipient. A compiler, a training procedure, or a learner might
transform received material into a new generative procedure. Whether this occurs is a further
matter concerning the resulting system. It is not settled by classifying the input as a product
of someone else's cognition.

Chollet's distinction between skill and skill acquisition is directly relevant to the first
direction (2019). The present argument also emphasizes the second: withholding a general
acquisition capacity leaves room for substantial recognition and reasoning capacities. Likewise,
the distinction between transmission and innovation in Yiu et al. (2024), and the
cultural-technology account developed by Farrell et al. (2025), draws attention to how systems
can benefit from accumulated human achievements. The question pursued here is what the resulting
resource enables its possessor to do. Its cultural provenance does not settle that question in
either direction.

There is a further constraint on arguments from the prerequisites of generation. Suppose
acquisition capacity $G$ requires a capacity $P$, and possession of representation $R$ does not
entail $G$. It does not follow that possession of $R$ fails to entail $P$. A simple logical
counterexample makes this clear: let $G$ require both $P$ and an independent capacity $Q$, while
the established use of $R$ already requires $P$. Then $R$ can guarantee $P$ without guaranteeing
$G$.

Spatial discrimination might, for example, be needed both for making a map and for using one. The
failure of the inference from map possession to map acquisition would then leave the attribution
of spatial discrimination untouched. Each proposed prerequisite must be considered separately; no
general list of absent capacities follows from a single failed entailment.

A valid argument in the other direction is available. If a further capacity $C$ requires the
specified acquisition capacity $G$, a system possessing $R$ but lacking $G$ also lacks $C$. This
can establish that $R$ does not entail $C$. The necessary premise $C \Rightarrow G$ must,
however, be defended for the task in question. Independently investigating an unknown category
plausibly includes some capacity to establish its criterion from evidence. Classifying instances
under a criterion already acquired need not. The examples in Section~4 make this distinction
explicit.

\section{Application can be complete while acquisition remains open}

\subsection{The map user and the mapmaker}

\begin{proposition}[Application and acquisition]
Within a finite family containing at least two targets distinguishable by application queries,
suppose that each target has a finite, nonempty set of admissible observation histories, each
identifying its target. Two systems can possess the same acquired representation and the same
complete application interpreter, agree on every application query about the acquired target, yet
differ in whether they can acquire the family members from those histories. Exhausting the
application queries therefore need not distinguish their acquisition capacities. The formal
assumptions and proof are in Appendix~A2.
\end{proposition}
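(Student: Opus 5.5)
The proof is a construction. I will fix a finite family $\mathcal{T}=\{t_1,\dots,t_n\}$ with $n\ge 2$ and a finite set $\mathcal{Q}$ of application queries. Each target $t$ comes with an answer function $a_t:\mathcal{Q}\to\mathcal{A}$, and distinguishability means $a_{t_1}\neq a_{t_2}$ for some pair of targets. Each target also has a finite nonempty set $H_t$ of admissible histories. Identification means the sets $H_t$ are pairwise disjoint, so any $h\in\bigcup_t H_t$ determines a unique target $\tau(h)$.

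I model a system as a triple $(r,I,L)$. Here $r$ is the stored representation, $I$ is an interpreter mapping a representation and a query to an answer, and $L$ is a learner mapping histories to representations. The acquisition capacity for the family is the condition that $I(L(h),q)=a_{\tau(h)}(q)$ for every admissible $h$ and every $q$. For the representation format I take $\mathcal{T}$ itself, or any injective encoding of it, and set $r=t_1$. The complete application interpreter is the single function $I^*(t,q)=a_t(q)$, which both systems share.

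Next I define the two learners. System $A$ uses $L_A(h)=\tau(h)$, which is well defined by disjointness. Then $I^*(L_A(h),q)=a_{\tau(h)}(q)$, so $A$ acquires every family member from every admissible history. System $B$ uses the constant learner $L_B(h)=t_1$. Both systems hold $r=t_1$ and use $I^*$, so on the acquired target they give identical answers $a_{t_1}(q)$ for all $q\in\mathcal{Q}$. This gives agreement on every application query, and that agreement exhausts the finite query set.

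It remains to show that $B$ fails acquisition. By distinguishability, $t_1$ differs from at least one other target $t_j$ (after relabeling if needed, taking $t_1$ as one of the distinguished pair). Since $H_{t_j}$ is nonempty, pick $h\in H_{t_j}$. Then $I^*(L_B(h),q)=a_{t_1}(q)\neq a_{t_j}(q)$ for some $q$. Nonemptiness of the history sets is exactly what makes this witness exist. Distinguishability rules out the degenerate case where a constant learner would be accidentally correct.

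The conclusion is that the set of application-query answers is a function of $(r,I^*)$ alone, which the two systems share. Any acquisition-sensitive property depends on $L$, so no application query can separate $A$ from $B$.

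The main obstacle is definitional rather than technical: stating "complete application interpreter" and "acquisition capacity" so that the former really is fixed independently of the learner. In particular, $I$ must not be allowed to consult histories, or application behavior could leak acquisition information. I would handle this by making the interpreter's signature exclude histories and noting this assumption explicitly. Finiteness is used only to make "exhausting the queries" a well-defined finite check and to keep $L_A$ trivially computable, for example by table lookup.
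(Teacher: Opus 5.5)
Your construction is correct and is essentially the paper's proof of Proposition~A1. The paper likewise pairs a decoder-based updater $U_1(r,d)=e(\ell(d))$, computable by finiteness and disjointness, with the retaining updater $U_0(r,d)=r$, which is your constant learner started from $r_0$; it then notes that every application answer is $v(r_0,x)$ because queries do not change the representation. One small point: your ``relabeling'' step is unnecessary, since if some pair of targets has distinct answer functions, then any choice of $t_1$ already differs from at least one member of that pair.
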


Consider a network of six named locations connected by passages. Two systems possess the same
accurate adjacency matrix and run the same graph-search procedure. Each can answer whether two
locations connect, whether one is reachable from another, and which route is shortest, with a
fixed convention for equally short routes. The answers are computed from the represented
connections. They are not retrieved from a list of previously answered questions. Both systems
can discover an unasked-for route through the network, including one never considered by whoever
produced the map.

Now give each system reports about a different environment on the same six locations. The reports
identify the presence or absence of all fifteen possible undirected connections, in any order.
One system records them in a new matrix. The other retains its old matrix, although it remains
perfectly able to reason with that matrix. The first can acquire an operative map from the
reports. The second cannot acquire the new map by this route. The information is available to
both, and the required learning procedure is straightforward. Their difference is not produced by
asking for an inference that the observations cannot support.

Before those reports are processed, the two systems agree on every application question about
their acquired map. This agreement is exhaustive: testing every connection, every reachability
claim, and every shortest route would not distinguish their acquisition capacities. More
difficult or more numerous questions about the old network cannot reveal which procedure handles
evidence about another network. An application evaluation does not become an acquisition
evaluation merely by becoming comprehensive. Once the evaluation supplies a new environment and
requires an operative new map, it has changed the achievement being examined.

The example allows the initial map to have a normal representational history. Both systems may
first have acquired it by learning, after which the acquisition mechanism of one is disabled
while its map and search procedure remain intact. Alternatively, both may have received an
accurate map from another source. The argument needs the retained content and use to be possible,
not a theory on which any arbitrary physical state counts as a representation. It also leaves the
implementation of learning open: a single integrated device could realize the relevant
dispositions. Distinct capacities need not occupy distinct modules.

The conclusion is a failure of entailment, not a finding that a particular neural model resembles
the retaining system. An acquired structural representation, even together with complete
application competence, leaves at least two possible acquisition profiles open. Background
knowledge about a particular system may favor one. The point is that the representation and
application profile alone do not settle the matter. Proposition~A1 and its corollary formalize
this construction.

\subsection{A family of category criteria}

The same argument applies to acquiring a criterion rather than a spatial map. Consider devices
with four binary features: a trigger, a gate, a shield, and a charge. A target category is fixed
by one of six possible criteria: trigger and charge; (trigger or gate) and charge; trigger and no
shield; (trigger or gate) and no shield; charge and no shield; or at least two of trigger, gate,
and charge. The family thus varies which features matter, whether an apparent prerequisite is
dispensable, and whether conjunction, disjunction, or a threshold determines membership.

Both systems begin with the criterion ``trigger and charge.'' They can classify all sixteen
feature combinations, recognize shared membership, and apply the criterion to devices they have
never encountered. Give both systems labelled observations from a new target category. One
maintains the set of candidate criteria consistent with the evidence and adopts the remaining
criterion when the evidence identifies it. The other continues to use its acquired criterion.
Sufficient observations always exist because the six criteria disagree somewhere in the finite
feature space; a complete labelled set is sufficient, although identifying subsets can be
smaller.

Suppose the new criterion is ``(trigger or gate) and charge.'' The evidence can show that a
charged device without a trigger still qualifies if it has a gate. Learning this criterion
removes the trigger as a necessary condition while retaining charge as necessary. It changes the
classification of both shielded and unshielded devices with a gate and charge but no trigger. The
change thus has consequences for multiple objects, including objects not used as identifying
examples. Appendix~A4 specifies a six-observation set that identifies each criterion in the
family and leaves ten feature combinations unobserved.

This is more than changing one object's recorded membership. The acquiring system adopts a rule
whose consequences extend beyond the observations used to select it. Nevertheless, both systems
were completely competent with the initial rule. Neither the breadth of that classification
competence nor its extension to previously unseen instances guarantees the capacity to establish
a new criterion. The family is deliberately modest: it shows that the separation arises even when
all relevant features and the space of possible criteria are available. No demand for the
creation of entirely new conceptual primitives is needed.

\subsection{Supplying a criterion and finding one}

\begin{proposition}[Acquisition under different inputs]
In the same finite setting, reliable acquisition and use of every target from its complete
specification does not entail reliable acquisition and use of those targets from observations,
even when the admissible observations identify the target. The two capacities can differ while
the representation format and application interpreter remain fixed. Appendix~A3 supplies the
proof.
\end{proposition}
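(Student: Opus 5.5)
The proof will be a non-entailment argument by explicit construction, parallel to the construction behind the application--acquisition proposition. Fix the finite setting: a family $\mathcal{T}=\{T_1,\dots,T_n\}$ with $n\ge 2$ targets distinguishable by application queries, a fixed representation format $\mathcal{R}$ with one representation $r_i$ per target, and a fixed complete application interpreter $I$ answering every query correctly for $r_i$. Each target has two kinds of admissible input. One is a complete specification $s_i$, a canonical code naming $T_i$ (for instance the criterion written out). The other is a finite, nonempty set $H_i$ of observation histories, each identifying $T_i$, so that $h\in H_i\cap H_j$ implies $i=j$. I will exhibit two systems $S_1,S_2$ sharing $\mathcal{R}$ and $I$ that both acquire every target reliably from specifications, while only $S_1$ acquires the targets from observations.

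\textbf{Step 1: defining the two systems.} An acquisition map sends an input to a stored representation, and use is $I$ applied to the stored representation. For specifications, both systems use the same parser $\sigma(s_i)=r_i$. For observations, $S_1$ uses $\omega_1(h)=r_i$ for the unique $i$ with $h\in H_i$. This is well defined because of the identification assumption and computable because the $H_i$ are finite; equivalently, it is the version-space elimination of Section~4.2. $S_2$ uses $\omega_2(h)=r_0$, its prior representation, for a fixed $r_0$ (or simply retains whatever it currently stores).

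\textbf{Step 2: checking the claims.} On specifications, $S_1$ and $S_2$ are identical, so both reliably acquire and use every $T_i$. On observations, $S_1$ succeeds on every admissible history. $S_2$ stores $r_0=r_k$ on every history, so for any target $T_j$ with $j\neq k$ it is wrong. Because $T_j$ and $T_k$ are distinguishable by application queries, some query $q$ has $I(r_j,q)\neq I(r_k,q)$, so $S_2$ answers $q$ incorrectly. The capacity to acquire from specifications is therefore present in $S_2$ while the capacity to acquire from observations fails, and this refutes the entailment. The format $\mathcal{R}$ and interpreter $I$ are literally shared, which gives the final clause of the statement.

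\textbf{Step 3: ruling out trivializations.} The main obstacle is making sure the counterexample is not trivial in the way the statement excludes. The failure must not come from the observations being uninformative; that is exactly why identifiability of each $H_i$ is used, and $S_1$ witnesses that success is achievable. A second worry is that someone might object that $S_2$ could convert observations into a specification. I will handle this by noting that the entailment claim quantifies over all systems with specification competence, so one competent system lacking the conversion suffices. I will also remark that the 6-criterion category family of Section~4.2, together with the identifying set of Appendix~A4, instantiates the construction concretely.
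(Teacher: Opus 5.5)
Your proposal is correct and is essentially the paper's proof of Proposition~A2. The paper's $U_s$ parses $\mathrm{spec}(p_h)$ to $e(h)$ and retains the current representation on observation inputs (your $S_2$); the witness $(h_1,x^*)$ from Proposition~A1 gives failure of $G_{\mathrm{obs}}$ from $r_0$; and a procedure that additionally uses the decoder $\ell$ on observations (your $S_1$) succeeds in both modes. One small detail: make $\omega_1$ total, for instance by retaining the current representation on non-admissible strings, as the paper does for $U_1$.
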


Now equip the second system with a parser that installs any of the six criteria from a complete
verbal or symbolic specification. It can receive a novel criterion, construct the relevant
operative representation, and apply it correctly to every device. It genuinely acquires something
new. Its procedure can nevertheless fail to select a criterion from labelled observations. A
parser for a description and a procedure for identifying which description fits the evidence
accomplish different tasks.

The distinction does not depend on missing information. The same observation sets that identify
the criterion for the first system are available to the second. What the latter lacks is a
procedure for extracting their significance. Nor does successful instruction-following deserve to
be dismissed: interpreting the instruction and applying its consequences may itself require
substantial cognition. The justified attribution is acquisition under specification-providing
conditions. Acquisition from observations is an additional claim. Proposition~A2 gives the
corresponding formal counterexample.

The map and category cases establish two of the boundaries in Table~1. Complete application
competence does not establish acquisition from fresh evidence, and genuine acquisition under one
form of input does not automatically establish acquisition under another. They leave positive
attributions intact while identifying exactly where an argument needs something further.

\section{The attribution framework}

The answer to the scope question has three parts. \textbf{Within application}, a representation
plus an adequately established mechanism can support untested cases throughout the mechanism's
justified domain. \textbf{Across acquisition conditions}, even exhaustive application competence
supplies no deductive warrant by itself; an extension requires grounds that connect the system to
the new acquisition demand. \textbf{Without an established role}, a content label or decodability
result does not independently warrant the richer functional attribution, although it can remain
informative evidence. These are limits on particular inferences, not an assertion that
application evidence has zero probabilistic relevance to learning.

Table~1 makes these claims usable. Its first column records the established basis; the second
states the positive attribution; the third identifies an additional commitment that is not
automatic. An adequate basis may be mechanistic, behavioral, or explanatory. Thus the table
neither treats a few correct answers as a disposition nor requires every future application to be
tested separately. The five rows are useful distinctions among evidence conditions, not an
exhaustive taxonomy of representations. Propositions~1 and~2 establish the two acquisition
boundaries. Section~5.2 gives separate constructions for inquiry and retention.

\bigskip
\noindent
\begingroup
\small
\begin{tabularx}{\textwidth}{@{}XXX@{}}
\toprule
\textbf{Established achievement} & \textbf{Warranted capacity attribution} &
\textbf{Further attribution requiring support} \\
\midrule
Causally effective tracking across presentations &
Recognition and same-tracking over the supported variations &
Acquisition of new target distinctions from fresh evidence \\
An operative criterion evaluator or structure-based procedure with established correctness
conditions &
Classification and inference throughout the supported domain, including untested instances and
queries &
Discovery or reconstruction of that criterion or structure \\
Repeated acquisition from complete specifications &
Acquisition and application under specification-providing conditions &
Acquisition from examples or observations without the specification \\
Repeated acquisition from identifying observations &
Observation-based acquisition over the supported target family &
Selection of informative observations and autonomous inquiry \\
Demonstrated retention and reuse across stated interruptions or contexts &
Retention and reuse under those temporal and resource conditions &
More enduring or broader accumulation without those conditions \\
\bottomrule
\end{tabularx}

\medskip
\noindent\textbf{Table 1.} Scope of attribution from representational achievements. The
rows distinguish evidence conditions; they are not successive grades of intelligence. A further
attribution may be supported by an established mechanism, performance under the relevant demands,
or a justified explanatory inference. Its placement in the third column does not imply its
absence.
\endgroup
\bigskip

\subsection{Why the positive scope exceeds observed performance}

The positive proposal is not merely that an established capacity may be renamed. Mechanistic
evidence can describe an encoding and an algorithm without listing the answers to every possible
application; an argument about how that algorithm operates can then warrant the unlisted answers.
The inference uses the mechanism's organization and correctness conditions, rather than assuming
the full behavioral profile that it concludes.

Consider a concrete realization of the map user. Its accurate adjacency matrix is read by
breadth-first search: the procedure places a starting location in a queue, examines the neighbors
of each queued location, and records a location only on its first discovery. Every recorded
predecessor link is a genuine edge, and every location reachable in a given number of steps is
discovered by the time that search depth has been completed. Induction on path length therefore
establishes reachability coverage. Because discovery proceeds by increasing distance, the first
predecessor chain to a location is a shortest route. On a finite graph, the procedure terminates.
These facts warrant answers to source--destination queries that have never been run, provided the
stored map is accurate and the implementation and resource conditions satisfy the argument. The
justification explains the extension instead of merely describing previous success.

The resulting scope is broad but determinate: reachability and shortest-route questions about the
represented unweighted network. It does not automatically cover time-dependent travel costs, an
inaccurate map, or learning a new network from observations. Even a procedure correct on every
supplied map need not construct a map. This is how a positive projection within application can
coexist with Proposition~1's non-entailment across acquisition. For approximate neural
mechanisms, an explanatory account may justify a fallible projection over a specified
distribution rather than an exact guarantee over every input; the distinction between these
strengths of warrant should be reported.

The same pattern applies to Table~1's other rows. Evidence of effective tracking warrants
recognition over supported presentation changes. An established criterion evaluator warrants
classifications of previously unencountered feature combinations. Neither permission depends on
those very instances having appeared in an evaluation, although it does depend on grounds for
extending the mechanism's operation to them. Information recoverable by a probe without such
grounds supports a narrower claim about encoding.

The acquisition rows credit mechanisms and dispositions with their input conditions intact. A
procedure that interprets new specifications and puts them to use warrants
specification-conditioned acquisition. A learner that establishes criteria from identifying
observations warrants observation-conditioned acquisition within its supported family. A
mechanism may justify either attribution beyond the cases already run. Retention and reuse
likewise extend over the interruptions, durations, and resources for which there is an adequate
basis. These grants add substantive commitments; the table's third column prevents those
commitments from expanding unnoticed.

\subsection{Why extending the attribution requires more}

The third column identifies demands that differ from those already established. In the first two
rows, the further task requires obtaining a distinction or structure that the original
application task presupposes. Section~4 shows that keeping all original application answers fixed
can leave success on that further task variable. This is why ``test more applications'' is not,
by itself, an answer to the acquisition question.

In the third row, removing a complete specification requires the system to establish a criterion
that it previously received. Even if the observation set uniquely determines the answer, a
procedure for extracting that answer remains necessary. Success after receiving the answer in a
usable description cannot demonstrate that procedure. This is the substantive reason for the
specification--observation distinction, rather than an arbitrary division of input formats.

The fourth row adds evidence selection. Equip two copies of the six-criterion learner with the
same ability to request the label of any of the sixteen feature combinations. Both infer
correctly when an identifying dossier is supplied. Under a common budget of sixteen requests, one
selects every combination once; the other repeatedly requests the all-zero combination. Every
criterion assigns that combination the same negative label, so the second policy never
distinguishes the targets, whereas the first obtains an identifying dataset. The systems share
the passive learning procedure, measurement channel, and budget but differ in successful inquiry.
Observation-based learning therefore does not entail selecting informative observations; the
separation is established by a construction, not by an omission from Proposition~1.

The fifth row concerns a different transition. Take two systems with the same acquisition
procedure, acquired criterion, and within-episode answers. At an episode boundary, one preserves
an accessible copy of the criterion; the other clears every acquired copy and returns to its
initial state. Supply no new specification or observations after the boundary. A later query that
distinguishes the acquired criterion from the initial one can then receive different answers.
Agreement throughout acquisition and application before the interruption does not determine
retention afterward. This construction changes the persistence condition explicitly; it does not
infer that externally stored information is less genuinely retained or that temporary acquisition
never occurred.

These differences explain why the table is not a single ascending hierarchy. A system may
preserve a received map for years without being able to make one; another may build an accurate
map and then lose it when its working memory is cleared. Likewise, an expert user may reason
extensively from an acquired theory while having narrower acquisition capacities than a simple
learner in its small domain. The strength of an attribution must be specified by what varies, not
summarized by one undifferentiated capacity label.

\subsection{How a stronger attribution can be warranted}

Additional support can take several forms. An operative mechanism may be independently known to
acquire representations across the relevant target family. Repeated success under the relevant
acquisition conditions may support the corresponding disposition. A broader explanatory argument
may connect the established performance to a reusable learner. Direct observation of every
internal step is not required. What matters is that the grounds support the additional task
rather than restating success on the original one.

The argument thus limits entitlement from specified evidence, not the capacities a representer
may possess. A learned representation can encode a procedure, including a learning procedure, and
a system can execute it. Verified execution would support that acquisition attribution.
Similarly, compression or instruction can give rise to a new generative competence rather than a
store of answers. No fixed ceiling follows from describing a resource as acquired. What fails is
an automatic transfer from the product's cognitive status to every capacity associated with its
production.

\section{Applying the framework to published attributions}

\subsection{Andrew Ng on Othello-GPT and world understanding}

Li et al. (2023) trained a sequence model on Othello transcripts and investigated board-state
information in its activations. Their interventions provide evidence that this information
matters to move prediction, including interventions involving board configurations unreachable by
ordinary play. This supports more than a correlation accessible only to a researcher. Li's
accompanying explanation nevertheless explicitly declines to infer understanding or intelligence
from the crow analogy used to present the result (Li, 2023, ``A thought experiment'').

Ng's original discussion makes an explicit interpretive claim. He writes that the Othello work
supports the conclusion that ``LLMs build world models,'' and connects world-model complexity to
a qualified attribution of understanding (Ng, 2023). He explicitly describes a process of
training on move sequences and acknowledges the lack of an agreed scientific test of
understanding. It would therefore misrepresent his position to read it as a demonstrated ability
of the deployed checkpoint to discover arbitrary new rules. The relevant question is what scope
the positive interpretation warrants.

Under Table~1, the intervention evidence supports an acquired board-state representation with a
demonstrated role in prediction: an application achievement in the second row. The observation
that training produced this result is also evidence about that particular training process. These
grants need not be withdrawn. They do not determine whether the deployed checkpoint can infer
another game's rules from examples, select informative interactions, or preserve a newly acquired
representation across resets. Such extensions change the target family, bearer, input conditions,
or temporal conditions. Calling the resource a world model does not supply the missing bridge.

This analysis gives Ng's interpretation determinate content without treating it as an explicit
universal implication. If world understanding denotes the established representational use, the
framework confirms that attribution in its specified domain. If the description is taken to
license a further acquisition capacity, that commitment requires additional grounds. The
distinction is between the justified scope of the evidence and a possible extension of its
description, not between the competence of the original investigators and the looseness of public
commentary.

Roth (2023) offers a revealing secondary illustration when he describes the result as ``some
understanding of the world'' and adds ``kinda like smart kids learning.'' A possible effect of
this analogy is to invite a reader to carry capacities for learning new distinctions into the
attribution. That is an interpretive risk of the wording, not an allegation about Roth's intended
thesis or a measured claim about readers' responses. The scope analysis does not need the analogy
to bear the argumentative weight: the controlled evidence and Ng's explicit world-model
attribution already provide a substantive case.

Gupta and Pruthi (2025, \S\S3--4) also scrutinize the passage from world models to understanding,
asking whether representations of states and transitions capture explanatory insights. The
present question remains even if those insights are represented and used. A system can reason
productively with a richer acquired model while lacking a particular way to obtain another one.
Enriching what is represented and establishing how new representations can be acquired are
consequently distinct responses to distinct attribution questions.

\subsection{A worked assessment of Beckmann and Queloz's indicators}

Beckmann and Queloz distinguish conceptual, state-of-the-world, and principled understanding, and
separate crystallized from fluid understanding of principles (2026, \S\S2--4). Table~1 adds a
different classification: what particular evidence warrants attributing under particular inputs.
Three indicators make the comparison concrete.

\paragraph{Entity features: support for the first row.}
Their discussion of conceptual understanding includes a feature associated with the Golden Gate
Bridge. The underlying study identifies responses across languages and images and reports changes
in output when the feature is amplified (Templeton et al., 2024). The relevant evidence thus
combines cross-presentation sensitivity with a causal intervention, rather than relying only on
probe decodability. Table~1 locates the positive claim in recognition and target-related use over
the supported variations. Amplification, especially outside the normal activation range, does not
by itself establish an unrestricted recognition mechanism or every inference about the bridge;
causal interpretation requires the other evidence to support that extension. Nothing in this
attribution requires denying the represented target. The further capacity to establish an
unfamiliar entity distinction from new observations would need evidence of the fourth-row kind.

\paragraph{Modular addition: support for the second row.}
Nanda et al. (2023) study a small transformer trained on addition modulo 113, using 30\% of the
input pairs for training and evaluating the remaining pairs. They identify a Fourier-based
procedure and test its causal relevance through targeted ablations. The evidence supports an
operative computation, including application to inputs absent from training. Its place in
Table~1 is therefore the second row, with a specified arithmetic domain. This is a positive
inferential attribution, not a retreat to mere representation. It also illustrates why mechanism
matters: the analysis explains how generalization is produced. However, explaining this
computation does not establish a procedure for identifying a different operation from a few
examples at inference time. The study of circuit formation supplies additional evidence about
acquisition during the reported training process, whose bearer and conditions must be kept
distinct from the trained checkpoint's later capacities. This matches Beckmann and Queloz's
treatment of the example as crystallized understanding (\S 4.4).

\paragraph{Fluid understanding: evidence relevant to the fourth row.}
In \S 4.4, Beckmann and Queloz ask whether models can identify a new principle at inference time.
They turn to ARC, where example input--output pairs are supplied and the test taker must produce
an output for a further input; they do not infer a positive answer from the modular-addition
circuit. Their cited technical report describes both model-guided program synthesis and test-time
training (Chollet et al., 2025, \S\S2--3). In the synthesis approach, candidate programs are
generated, checked against the demonstrations, and used on a test input. The target program is
not supplied as a complete rule description. This is evidence relevant to observation-conditioned
acquisition and application, rather than merely to applying a given rule.

The precise credit belongs to the evaluated procedure. Successful synthesis provides positive
evidence that the system comprising the model, search, selection, and interpreter can obtain a
task-solving rule from examples in the tested range. It does not isolate all those operations in
the model component. Nor do a few ARC examples uniquely identify a rule over an unrestricted
hypothesis space: the assessment depends on the task family, prior repertoire, novelty controls,
and permitted computation. Appendix~A's finite identifying histories establish a logical
separation; they are not an assumption that every benchmark item provides logical identification.
The empirical attribution is correspondingly defeasible and bounded. It does not establish
autonomous selection of environmental observations or retention across episodes, neither of which
is demonstrated simply by solving from the supplied pairs.

The worked assessment consequently confirms the crystallized--fluid distinction and makes its
application more precise. The second-row evidence warrants arithmetic application; the fourth-row
task probes acquisition from examples, with positive credit where the complete evaluated system
succeeds. Acquisition from a supplied novel rule would instead fall in the third row, even if it
occurs at inference time. Thus neither novelty of content nor inference-time construction alone
settles the acquisition condition. The framework does real work in a serious source without
requiring its authors to have made an invalid inference.

\subsection{Vector formats and operative procedures}

Piantadosi et al. (2024) argue that vectors, together with appropriate dynamics, can meet
familiar demands on conceptual representation, including procedural knowledge. That format-level
proposal is compatible with every row of Table~1. Whether a particular system realizes a row
depends on its operative capacities, not the expressive resources of the format alone.

A vector representation of a learning procedure can indeed contribute to an acquisition capacity
if the system executes the procedure. Such a case belongs in the table's positive columns.
Conversely, evidence that a format can encode procedures does not show which procedures a trained
system implements. Section~4 grants a shared effective interpreter and still separates
acquisition profiles. Format adequacy and realization in a particular system therefore answer
different questions.

The consequence is symmetric. Failure to discover a new criterion does not refute vector
representation as a medium for concepts. The adequacy of vectors as a medium does not establish
that a particular model can discover it. A substantive attribution identifies what the model does
with the represented content and under which input conditions it can obtain further content.

\subsection{New representations through linguistic input}

Coelho Mollo and Milli\`ere (2026) discuss transient grounded representations through in-context
learning. Their new-game example supplies a complete statement of the rules; the proposed
grounding route is presented as a possibility requiring further mechanistic
investigation.\footnote{The new-game example and its qualification are cited from the authors'
publicly available manuscript, arXiv:2304.01481v3, \S 6.2.3. The reference list records the
subsequent journal publication.} This provides a useful application of the third row of Table~1.
If the model constructs and uses a new representation from those rules, it genuinely acquires
operative content under specification-providing conditions. The additional question is whether it
can obtain comparable rules from observations of play.

Calling the first achievement instruction-following does not diminish it. A sufficiently rich
instruction may require conceptual interpretation and substantial inference. It still supplies a
result that a discovery task requires the system to obtain. The distinction remains relevant
after the strongest reasonable interpretation of the new representation has been granted.

Lederman and Mahowald (2024) emphasize the challenge of novel reference for accounts that explain
model meaning entirely through inherited textual meaning. Their interpretationist argument
provides a reason to recognize further referential achievements when the relevant behavior
supports them. Table~1 allows that extension without turning new reference into evidence of every
way of discovering, investigating, or retaining a referent's structure.

Yildirim and Paul's (2024) distinction between instrumental and worldly knowledge addresses
whether an operative representation captures structure in the world. The map argument begins
after such structural and functional resources have been granted. Its question is what further
acquisition capacity follows. A richer account of what is represented can strengthen the positive
application attribution while leaving that question open. The two distinctions concern different
claims and should be used together.

\section{Objections and empirical relevance}

\subsection{Is there an error to correct}

A critic may accept the examples but ask who asserts the implication they reject. Ng's world-model
interpretation is an explicit, qualified cognitive attribution, not a universal claim about every
form of learning. Roth's analogy illustrates a possible extension in how an attribution is
received, not evidence of a developed philosophical mistake. The paper should accordingly not
claim to have exposed a widespread deductive fallacy on the strength of either text.

The stronger answer is the worked assessment in Section~6.2. It identifies the inputs,
mechanisms, and outcomes of actual indicators, assigns them to the table, and states both
positive and unestablished capacities. The result confirms part of Beckmann and Queloz's account
and distinguishes acquisition conditions and bearers within it. This is useful even when the
original attribution is correct: a framework for scope should specify what a qualified claim
commits its user to, rather than depend for its value on finding an incautious opponent. The
separate logical constructions explain why the additional questions are substantive.

Nor is the positive proposal simply ``attribute whatever has been established.'' Section~5.1
supplies the missing inferential step: evidence identifying a mechanism, together with a
correctness argument under specified conditions, warrants application claims not already
contained in a list of observed responses. The negative constructions then show why the same
reasoning does not automatically extend to acquisition. The contribution combines justified
projection with specified points at which a further bridge is needed.

The examples might also be charged with defining one system to fail. For an empirical claim that
a real model lacks acquisition, that objection would be decisive. For a proposed entailment, the
appropriate test is whether the stipulated representation and effective use can coexist with
different acquisition profiles. Ordinary maps and graph-search procedures demonstrate that they
can. Preserving complete application competence is what gives the example its force: its
conclusion survives the strongest application evidence, not just an impoverished test. The formal
appendix records this limited logical result without making its elementary proof the paper's
principal contribution.

\subsection{Representational content already requires broader abilities}

A functional-role theory may hold that representing a category requires more than classification,
perhaps including systematic inference or responsiveness to reasons for revision. The present
account can grant these requirements. Their consequences belong in the initial achievement
description. The question then becomes whether the particular further acquisition capacity also
belongs there. Adding some inferential and corrective dispositions does not establish every other
disposition without argument.

Camp (2009) distinguishes commonality of representational content from the systematic abilities
characteristic of conceptual thought. Butlin (2023, \S 3) develops related distinctions in
discussing machine concepts, including the difference between possible recombinations elicited by
stimuli and more substantive representational abilities. These arguments show why representational
claims can carry genuine functional commitments. They do not make the acquisition of every new
category a consequence of every such commitment. The current analysis asks what remains open
after the relevant systematic abilities have been granted.

A constitutive thesis that concept possession requires answerability for misuse would add a
substantive condition of possession. Whether that thesis is correct is a separate issue from
whether a possessor can acquire new criteria. A system might preserve and appropriately revise
the use of an acquired criterion while having a restricted repertoire of ways to obtain others.
Conversely, selecting a criterion from labelled examples need not establish every form of
normative answerability. This paper neither derives nor rejects the constitutive thesis. It
requires an attribution invoking it to establish its additional condition, while crediting the
narrower capacities already supported.

\subsection{Different histories and external support do not settle capacity}

A further objection concerns human comparison. Most people acquire concepts through language and
instruction without repeating the discoveries that made those concepts available. They can still
understand and reason with them. This is a reason to reject any requirement that a concept
possessor reproduce its original acquisition history. It is also a reason to retain the present
distinction: understanding a received result does not establish competence in every investigation
that contributed to it.

Similar caution applies to external support. Milli\`ere and Rathkopf (2026, \S\S3.1--3.3)
distinguish performance limitations imposed by auxiliary demands from incapacity, and warn
against rejecting unfamiliar implementations because they differ from human mechanisms. A model's
failure after its memory is removed may reflect the removal of a condition for exercising a
capacity. It does not by itself show that the capacity was performed by the memory. A component
that preserves an acquired representation need not be the component that generated it.

Three cases should therefore be distinguished when interpreting scaffolding. The external
resource may provide a completed target representation, preserve a representation produced by the
model, or carry out part of the inference needed to obtain it. These differences bear on
attribution, but a performance gap alone does not discriminate among them. The present argument
uses such cases to specify input and resource conditions; it does not infer a location of
cognition from a score difference. Nor does it require that all legitimate cognitive capacities
be exercised without external resources.

\subsection{Learning can survive compression or fixed parameters}

The claim that a representational product can compress earlier cognitive work does not entail
that a compressed system lacks the corresponding capacity. A learned procedure can implement the
same relevant transformations through different internal steps. If it succeeds across the
required changes, its competence should be recognized. Hsieh et al. (2023) use explanations
produced by a larger model as training signals for smaller models and report improved task
performance. Such results illustrate why receiving a product of another system can help produce a
new competence, rather than merely adding stored outputs.

Fixed parameters also do not imply the absence of acquisition. In a controlled regression
setting, von Oswald et al. (2023) construct and study attention mechanisms that perform
transformations corresponding to gradient-based learning within the forward pass. This is
evidence about specified models and tasks, not a universal explanation of in-context learning. It
nevertheless shows why parameter updating cannot serve as the definition of every kind of
acquisition. Relevant states may change within an episode while the longer-term parameters remain
fixed.

The proper response to these possibilities is to identify the operative learning procedure and
its domain. If the procedure acquires task-specific representations from examples, the
corresponding capacity should be attributed. Whether the result persists, transfers to another
task family, or supports independent inquiry is a further question. This is a substantive
expansion of attribution, not an exception to the account.

\subsection{What existing experiments contribute}

The empirical literature already distinguishes several components that broad capacity labels can
obscure. Qiu et al. (2024) study iterative hypothesis refinement, including hypothesis proposal,
selection, and application to instances. Their use of a symbolic interpreter helps distinguish
the contribution of proposed rules from the ability to apply them. These findings supply evidence
about particular systems and procedures; they do not justify treating all models as possessors of
content without any generative process. Indeed, hypothesis proposal is itself a positive
contribution that the attribution should preserve.

Tyen et al. (2024) distinguish finding an error from correcting a solution after its location is
supplied. The analogous inference at issue is from successful correction with diagnostic
information to successful production of that information. Their results illustrate a difference
in input conditions resembling the one formalized in Proposition~A2. They do not establish that
the same difference holds across every kind of conceptual revision or every model.

ConceptEdit examines conceptual definition editing and its effects on instance-level knowledge;
the reported difficulties concern the methods and models studied (Wang et al., 2024). RelEdit
extends evaluation to related conceptual and instance-level reasoning and reports advantages for
a memory-based in-context editing approach (Niu et al., 2025). These studies provide reasons to
distinguish the successful installation of a description, its use in judgments, and the
procedures needed to revise related knowledge. Neither successful editing nor unsuccessful
propagation alone determines who can discover the revised criterion from evidence.

These studies are relevant to the theoretical argument because they examine changes in what is
supplied and what the system must accomplish. They need not be treated as a single empirical
proof of a general representational deficit. The constructions establish a logical possibility.
Empirical comparisons investigate which distinctions are realized, to what degree, and under
which conditions. Keeping these roles separate allows positive findings to strengthen the
resulting attributions rather than threatening the framework.

\subsection{A possible alternative does not defeat the best explanation}

Finally, the existence of a retaining system does not refute an inference to the best explanation
that a real model has a reusable acquisition capacity. If a mechanism or disposition explains a
broad range of successful acquisitions, then the hypothesis may be well supported even though a
logically possible alternative exists. Ordinary scientific attribution does not require
eliminating every imaginable alternative.

The present argument instead identifies a reason that certain alternatives are relevant to a
specific inference. The systems in Proposition~A1 agree on the entire type of achievement used as
the premise. Evidence restricted to that type does not discriminate their acquisition procedures
by direct testing. A defensible explanatory inference must accordingly draw on something more: a
known relation between application and learning mechanisms, a developmental history, successes
under acquisition demands, or other background constraints on the system class. Such premises may
be available. Their availability, rather than representation possession by itself, determines the
strength of the further attribution.

This gives the analysis a constructive stopping point. It is unnecessary to demand an experiment
for every philosophical claim, but it is equally unnecessary to treat the results of a thought
experiment as a diagnosis of actual machines. The theory specifies what the evidence needs to
bear on. It then recognizes the stronger capacity whenever an adequate explanatory basis is
present.

\section{Conclusion}

The scope of attribution from acquired representations is conditional and multidimensional. An
operative representation can support recognition, classification, and productive inference;
understanding its mechanism can warrant applications that have never been tested. In the map
case, an accurate matrix and an established breadth-first search procedure warrant reachability
and shortest-route answers throughout the represented unweighted network, under the stated
implementation and resource conditions. This is a substantive positive range, not merely a
redescription of recorded successes.

That application range does not entail an acquisition range. Even agreement on every application
query can coexist with different capacities to construct a map or identify a criterion from fresh
observations. Genuine acquisition from a complete specification also leaves observation-based
acquisition open. Obtaining a criterion from a supplied dossier does not entail selecting an
informative dossier, and within-episode acquisition does not entail later retention. Each
extension needs support for the changed demand. These are boundaries of deductive warrant from
specified premises; they neither deny the stronger capacity nor prohibit its attribution on
independent or explanatory grounds.

Conversely, a content label or a probe finding without an established operating role does not
independently warrant those application capacities. It can still support an informational claim
and contribute to a broader explanation. Because the paper remains neutral among content
theories, it supplies no universal functional lower bound from content alone. A theorist who
derives such a bound must make the relevant content-theoretic and functional premises explicit.
There is therefore no single capacity package, or fixed upper limit on possible abilities,
attached to the word \emph{representation}.

Table~1 gives the practical verdict. Credit recognition within supported presentation changes;
credit application within justified mechanistic or dispositional scope, including untested cases;
credit acquisition under the specification-providing or observation-providing conditions actually
supported; and credit retention and reuse under established temporal and resource conditions.
Credit autonomous inquiry when there are grounds for the additional evidence-selection capacity.
The framework confirms the trained Othello representation's demonstrated predictive role, the
modular-addition circuit's application competence, and observation-conditioned acquisition where
the evaluated ARC procedure supports it. It does not transfer these achievements to another
bearer, input condition, or temporal setting without an argument.

These conclusions concern what particular representational evidence warrants saying about a
system. They leave constitutive theories of concept possession open while replacing an
undifferentiated attribution with explicit commitments about what the system can do, over which
variations, and with which support.

\section*{Declarations}
\addcontentsline{toc}{section}{Declarations}

\noindent\textbf{Competing interests.} The author declares no competing interests.

\clearpage
\appendix
\setcounter{equation}{0}
\renewcommand{\theequation}{A.\arabic{equation}}
\section{Formal separation results}

The appendix records the logical structure of Section~4. Exact success is used to establish
counterexamples to entailment, not as a threshold for attributing fallible human or machine
abilities.

\subsection*{A1\quad Setting}

Let $H$, $X$, and $Y$ be finite sets of targets, application queries, and answers. Write
$a_{h,x}$ for the correct answer, and assume
\begin{equation}
h \neq h' \ \Longrightarrow\ \exists\, x \in X : a_{h,x} \neq a_{h',x}.
\end{equation}
Let $R$ be a finite state set, $e : H \to R$ an injective encoding, and $v : R \times X \to Y$ an
interpreter satisfying
\begin{equation}
v\bigl(e(h), x\bigr) = a_{h,x} \qquad (h \in H,\ x \in X).
\end{equation}
This grants the interpreter's competence; it does not infer competence or semantic content from
an arbitrary encoding. Fix $r_0 = e(h_0)$. Let $D$ be the set of finite observation strings, and
$D_h$ a nonempty finite set of admissible histories for each target. Require
\begin{equation}
h \neq h' \ \Longrightarrow\ D_h \cap D_{h'} = \varnothing .
\end{equation}
Thus admissible histories identify their target. For a total computable procedure
$U : R \times D \to R$, let $S_U = (r_0, v, U)$. Define acquisition success by
\begin{equation}
G_D(S_U) \iff \forall h \in H\ \forall d \in D_h\ \forall x \in X :
v\bigl(U(r_0, d), x\bigr) = a_{h,x}.
\end{equation}
At the application stage the interpreter uses the current representation without updating it in
response to a query. This is a possible system class, not a proposed architecture of all
cognition. A common finite resource bound suffices for all admissible cases in the constructions.

\subsection*{A2\quad Application does not entail acquisition}

\begin{propA1}
If $|H| \geq 2$, there are systems sharing the same initial representation and interpreter, and
agreeing on every application query about $h_0$, that differ in $G_D$.
\end{propA1}

\begin{proof}
Set $U_0(r, d) = r$. Since the admissible history sets are finite and disjoint, a computable
decoder $\ell$ satisfies $\ell(d) = h$ on $D_h$. Set $U_1(r, d) = e(\ell(d))$ on admissible
histories and $U_1(r, d) = r$ otherwise. Both systems initially return
$v(r_0, x) = a_{h_0, x}$ for every $x$. For $d \in D_h$, $v\bigl(U_1(r_0, d), x\bigr) = a_{h,x}$,
so $G_D(S_{U_1})$ holds. Choose $h_1 \neq h_0$ and a query $x^*$ distinguishing them. On any
$d \in D_{h_1}$, $S_{U_0}$ returns $a_{h_0, x^*} \neq a_{h_1, x^*}$, so $G_D(S_{U_0})$ fails.
\end{proof}

\begin{corA1}
No evaluation restricted to application queries about $h_0$ distinguishes the two systems, even
if it exhausts $X$ or uses an arbitrary sequence of such queries.
\end{corA1}

\begin{proof}
Every answer is $v(r_0, x)$ in both systems, and those queries do not change the representation.
\end{proof}

\subsection*{A3\quad Specification does not entail discovery from observations}

Let $p_h$ be a mechanically decodable specification. Extend inputs with tags
$\mathrm{spec}(p_h)$ and $\mathrm{obs}(d)$ that identify input modes. The observation tag does not
disclose the correct target. Let $G_{\mathrm{spec}}$ require acquisition and correct application
for all specifications, and let $G_{\mathrm{obs}}$ be the corresponding condition for admissible
observations.

\begin{propA2}
Under the assumptions of Proposition~A1, $G_{\mathrm{spec}}$ does not entail
$G_{\mathrm{obs}}$.
\end{propA2}

\begin{proof}
Define $U_s$ to return $e(h)$ on $\mathrm{spec}(p_h)$ and retain the current representation on
observation inputs. The interpreter condition gives $G_{\mathrm{spec}}$. The witness
$(h_1, x^*)$ in Proposition~A1 gives failure of $G_{\mathrm{obs}}$ from $r_0$. A procedure
additionally using $\ell$ on observations succeeds in both modes.
\end{proof}

\subsection*{A4\quad The six-criterion realization}

Write the four features as $(t, g, s, c) \in \{0,1\}^4$. The six criteria are
\[
t \wedge c, \quad
(t \vee g) \wedge c, \quad
t \wedge \neg s, \quad
(t \vee g) \wedge \neg s, \quad
c \wedge \neg s, \quad
\mathbf{1}[\,t + g + c \geq 2\,].
\]
They define six distinct functions on the sixteen feature combinations. An interpreter evaluates
whichever criterion is represented. A learner can filter the finite candidate set against
observed labels.

An identifying observation set can be chosen by taking the feature combinations
$(0,0,0,1)$, $(0,1,0,0)$, $(0,1,1,1)$, $(1,0,0,0)$, $(1,0,1,1)$, and $(1,1,0,0)$ with the labels
provided by the target criterion. The resulting six-bit label signatures are different for every
criterion. Ten combinations remain unobserved. In particular, changing from the first to the
second criterion changes the classification of the unobserved combination $(0,1,0,1)$ as well as
the observed combination $(0,1,1,1)$. Thus acquisition has a consequence outside the identifying
sample, rather than merely replacing an observed answer.


\end{document}